\newtheorem{Theorem}{Theorem}[section]
\newtheorem{Remark}[Theorem]{Remark}
\newcommand{\mc}{\mathcal}
\newcommand{\Fcal}{\mathcal{F}}
\newcommand{\Mcal}{\mathcal{M}}
\newcommand{\Ebb}{\mathbb{E}}
\newcommand{\Pbb}{\mathbb{P}}
\newcommand{\Rbb}{\mathbb{R}}
\newcommand{\Zbb}{\mathbb{Z}}
\newtheorem{definition}{Definition}
\newtheorem{theorem}{Theorem}
\newtheorem{lemma}{Lemma}
\renewcommand{\phi}{\varphi}
\begin{document}
	
\title{\textbf{A Generalization Bound of Deep Neural Networks for Dependent Data}}

\author{\small\sc Quan Huu Do$^a$, Binh T. Nguyen$^{a,b}$, Lam Si Tung Ho$^{c}$}
\date{}
\maketitle
\thispagestyle{empty}

\begin{center}
\textit{\small a-University of Science, Vietnam National University Ho Chi Minh City, Viet Nam\\
b-AISIA Research Lab, Ho Chi Minh City, Vietnam\\
c-Dalhousie University, Halifax, Nova Scotia, Canada}
\end{center}

\hrule
\begin{abstract}
Existing generalization bounds for deep neural networks require data to be independent and identically distributed (iid).
This assumption may not hold in real-life applications such as evolutionary biology, infectious disease epidemiology, and stock price prediction. 
This work establishes a generalization bound of feed-forward neural networks for non-stationary $\phi$-mixing data.

	\medskip
	\textbf{Keywords:} neural networks, generalization bound, non-stationary process, mixing stochastic process
\end{abstract}

\section{Introduction}

Explaining the generalization ability of machine learning methods (that is, they can provide a close fit to new, unseen data) lies at the heart of theoretical machine learning.
The main direction for this research topic is to bound the difference between the expected loss (population loss) and the empirical loss (training loss). 
This is known as generalization bound, which has been studied extensively in various settings \citep{freund2004generalization, zou2009generalization, agarwal2012generalization, cuong2013generalization, bartlett2017spectrally, golowich2018size, lugosi2022generalization}.

In the last decade, deep neural networks have become the central attention of the machine learning community due to their remarkable success in solving complex tasks that are considered to be challenging for existing machine learning methods. 
For example, in computer vision, tasks like image classification, facial recognition, and object detection have significant progress by applying deep neural networks \citep{krizhevsky2012imagenet}. In natural language processing, deep learning models have become state-of-the-art in language translation, sentiment analysis, and chatbots \citep{vaswani2017attention}. 
Additionally, they have made undeniable contributions to fields beyond computer sciences, including autonomous vehicles, healthcare \citep{esteva2019guide}, and finance \citep{heaton2017deep}.

Effort has been made to derive the generalization bound for neural networks \citep{bartlett2017spectrally, golowich2018size, dinh2020consistent, ho2022searching}.
However, these results assume that data are independent and identically distributed (iid).
Unfortunately, this assumption is not often satisfied in many applications, including evolutionary biology, infectious disease epidemiology, and stock price prediction. 
Therefore, it is crucial to study the generalization ability of deep neural networks when data are not iid.
In this paper, we will bridge this gap by establishing a generalization bound of feed-forward neural networks for non-stationary $\phi$-mixing data.
It is worth noticing that mixing data is the most common alternative to iid data \citep[e.g.][]{white1984nonlinear, modha1996minimum, mohri2010stability, dinh2015learning, ho2020recovery}.
In this paper, we consider on the popular $\phi$-mixing sequences: data are dependent, but the dependency of two data points decreases as their distance increases.
Furthermore, we do not require data to be identically distributed.
Instead, we allow the marginal distribution of data to converge to an unknown target distribution.
Under this setting, we establish a new generalization bound for feed-forward neural networks.

\section{Setting and main results}
\noindent\textbf{Setting:} 
We consider a classification problem setting where the input-output pairs $\{(X_i, Y_i)\}_{i=1}^n \subset \mathbb{R}^d \times \{1, 2, \ldots, K\}$ are not i.i.d. 
Specifically, we relax the independence assumption by assuming that the data $\mathcal{Z} = \{(X_i, Y_i)\}_{i=1}^n$ are generated from a $\phi$-mixing sequence:

\begin{definition}
    Let the $\{Z_k\}_{k=0}^\infty$ be a sequence of random variables. For any $i,j\in\Zbb$, let $\sigma_i^j$ denote the $\sigma$-algebra generated by the random variables $\{Z_k\}_{k=i}^j$. Then, for any positive integer $k$, the $\phi$-mixing coefficients of the stochastic process $Z$ is defined as 
    $$\mathit{\phi}(k)=\sup_{\substack{n\\A\in\sigma_{n+k}^\infty\\B\in\sigma_{0}^n}}\big|\Pbb\left[A|B\right]-\Pbb[A]\big|.$$
The sequence of variables $\{Z_k\}_{k=0}^\infty$ is said to be $\phi$-mixing if $\phi(k)\rightarrow0$ as $k\rightarrow\infty$.
\end{definition}

Additionally, we assume that the data $\mathcal{Z} = \{(X_i, Y_i)\}_{i=1}^n$ are not identically distributed.
Instead, the marginal distribution of $(X_i, Y_i)$ converges to the target distribution $\Pi$, which is the marginal distribution of the test data.
More precisely, 
\[
\mu_n := \big|\big|\Pbb_n-\Pi\big|\big|_{\text{TV}} \to 0
\]
where $\Pbb_n$ is the marginal distribution of $(X_i, Y_i)$ and $\| \cdot \|_{\text{TV}}$ is the total variation distance.\\

This paper will focus on feed-forward neural networks with $L$ hidden layers where the $i$-th layer has a weight matrix $A_i$ and an activation function $\sigma_i$.
Throughout the paper, we assume that each weight matrix has a dimension at most $W$ along each axis. 
Moreover, the activation functions $\sigma_i$ is $p_i$-Lipschitz (i.e. $|\sigma_i(x)-\sigma_i(y)| \leq p_i | x - y|$ for all $x,y \in \mathbb{R}$) and $\sigma_i(0) = 0$. 

Denote $\mathcal{A}=(A_1,\ldots,A_L)$ and $\boldsymbol{\sigma} = (\sigma_1,\ldots,\sigma_L)$. 
The corresponding feed-forward neural network $F_{\mathcal{A}, \boldsymbol{\sigma}}$ is
\begin{equation*}
    F_{\mathcal{A}, \boldsymbol{\sigma}}(x)=\sigma_L(A_L(\sigma_{L-1}(A_{L-1} \ldots \sigma_1(A_1x) \ldots)).
\end{equation*}
The network output $F_{\mathcal{A}, \boldsymbol{\sigma}}(x)\in\Rbb^K$ is converted to a class label in $\{1, \ldots , K\}$
by taking the $\arg\max$ over components, with an arbitrary rule for breaking ties. 
We will work with the popular ramp loss $\ell_\gamma : \mathbb{R} \to \mathbb{R}^+$: $\ell_\gamma(r):= (1+r^-/\gamma)^+$ where $a^+ = \max \{a , 0\}$ and $a^- = \min \{a, 0\}$.
The empirical loss $\mathcal{L}_\mathcal{Z}(F_{\mathcal{A}, \boldsymbol{\sigma}})$ and expected loss $\mathcal{L}(F_{\mathcal{A}, \boldsymbol{\sigma}})$ are defined as
\begin{align*}
   \mathcal{L}_\mathcal{Z}(F_{\mathcal{A}, \boldsymbol{\sigma}}) &= \frac{1}{n}\sum_{i=1}^n\ell_\gamma(-\mathcal{M}(F_{\mathcal{A}, \boldsymbol{\sigma}}(X_i),Y_i)) \\
   \mathcal{L}(F_{\mathcal{A}, \boldsymbol{\sigma}}) &= \mathop{\Ebb}_{(X,Y)\sim \Pi}\left[\ell_\gamma(-\mathcal{M}(F_{\mathcal{A}, \boldsymbol{\sigma}}(X),Y))\right]
\end{align*}
where $\Mcal(v,j):=v_j-\max_{i\neq j}v_i$ is the margin operator.

\noindent\textbf{Main results:} 
First, we will derive a uniform bound of the gap between expected loss and empirical loss for a general hypothesis space $\mathcal{H}$ and a bounded loss $\ell$ using Rademacher complexity. 

\begin{definition}
    Given a class of function $\mathcal{F}$ and a data set $\mathcal{Z}=(Z_i)_{i=1}^n$, the empirical Rademacher complexity is defined as
    \begin{align*}
    \mc{R}_\mathcal{Z}(\Fcal)=\mathop{\Ebb}_{\theta_1, \theta_2, \ldots, \theta_n}\left[\sup_{f\in\mathcal{F}}\left(\sum_{i=1}^n\theta_i f(Z_i)\right)\right]
    \end{align*}
where $\{\theta_i\}_{i=1}^n$ are independent Rademacher random variables.
The Rademacher complexity is defined as
\begin{align*}
\mc{R}_n(\Fcal)=\mathop{\Ebb}_{Z_1,\ldots,Z_n}\left[R_\mathcal{Z}(\Fcal)\right].
    \end{align*}
\end{definition}
 
\begin{theorem}\label{theorem100}
Suppose $\mathcal{H}$ is a hypothesis space and $\ell$ is a loss function bounded in $[0,1]$.
Let $\delta$ be a positive number. Under our setting, with probability at least $1 - \delta$, for all $h \in \mathcal{H}$, we have
\begin{align*}
    \mathop{\Ebb}_{(X,Y) \sim \Pi}\left[\ell(h(X),Y)\right] \leq \frac{1}{n}\sum_{i=1}^n\ell(h(X_i),Y_i)
    +  2\mathcal{R}_\mathcal{Z}(\mathcal{F}_\ell) + \frac{1}{n}\sum_{i=1}^n\mu_i + 3\sqrt{||\Delta_n||^2_\infty\frac{\log(2/\delta)}{2n}}
\end{align*}
where $\mathcal{F}_\ell=\{(X,Y)\rightarrow\ell(h(X),Y)|h\in\mathcal{H}\}$ and $||\Delta_n||_\infty= 1+2\sum_{k=1}^n\phi(k)$.
\end{theorem}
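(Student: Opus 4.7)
The plan is to carry out the classical symmetrize-then-concentrate argument for uniform deviations, with two substitutions adapted to the present setting: the usual McDiarmid inequality is replaced by a bounded-differences inequality for $\phi$-mixing sequences whose variance proxy inflates by $||\Delta_n||_\infty^2$, and the empirical loss is centered at the averaged marginal $\bar L_n(h):=\tfrac{1}{n}\sum_{i=1}^n \Ebb_{(X,Y)\sim \Pbb_i}[\ell(h(X),Y)]$ rather than at $\Ebb_\Pi\ell(h)$, with the resulting bias absorbed through the total variation distances $\mu_i$. The starting decomposition is
$$\Ebb_\Pi \ell(h) - \mc{L}_\mc{Z}(h) \;=\; \bigl(\Ebb_\Pi \ell(h) - \bar L_n(h)\bigr) + \bigl(\bar L_n(h) - \mc{L}_\mc{Z}(h)\bigr),$$
and I would treat the two summands separately.

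The first summand is deterministic. Because $\ell\in[0,1]$, the dual formulation of total variation distance gives $\Ebb_\Pi\ell(h) - \Ebb_{\Pbb_i}\ell(h)\le\mu_i$ uniformly in $h$, and averaging over $i$ produces the $\tfrac{1}{n}\sum_i\mu_i$ term. For the random summand define $\Phi(\mc{Z}):=\sup_{h\in\mc{H}}(\bar L_n(h)-\mc{L}_\mc{Z}(h))$; replacing any single $(X_i,Y_i)$ changes $\Phi$ by at most $1/n$ since $\ell\in[0,1]$. Invoking the Kontorovich--Ramanan bounded-differences inequality for $\phi$-mixing processes, which replaces the iid McDiarmid variance proxy $\sum_i c_i^2$ by $||\Delta_n||_\infty^2\sum_i c_i^2$, yields, with probability at least $1-\delta/2$,
$$\Phi(\mc{Z}) \;\le\; \Ebb[\Phi(\mc{Z})] + \sqrt{||\Delta_n||_\infty^2\log(2/\delta)/(2n)}.$$
To bound $\Ebb[\Phi(\mc{Z})]$ I would introduce a ghost sample $\mc{Z}'=\{(X'_i,Y'_i)\}_{i=1}^n$ whose marginals equal those of $\mc{Z}$, so that $\bar L_n(h)=\Ebb_{\mc{Z}'}[\mc{L}_{\mc{Z}'}(h)]$, pull the supremum outside by Jensen, insert Rademacher signs, and arrive at $\Ebb[\Phi(\mc{Z})]\le 2\,\Ebb[\mc{R}_\mc{Z}(\mc{F}_\ell)]$. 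One further application of the same $\phi$-mixing bounded-differences inequality, now to the map $\mc{Z}\mapsto \mc{R}_\mc{Z}(\mc{F}_\ell)$ (again with $c_i=1/n$ since $\ell\in[0,1]$), gives $\Ebb[\mc{R}_\mc{Z}(\mc{F}_\ell)]\le \mc{R}_\mc{Z}(\mc{F}_\ell)+\sqrt{||\Delta_n||_\infty^2\log(2/\delta)/(2n)}$ with probability at least $1-\delta/2$. A union bound over the two deviation events combines them, and the three matching square-root contributions sum to $3\sqrt{||\Delta_n||_\infty^2\log(2/\delta)/(2n)}$ as claimed.

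The main obstacle is the symmetrization step. The standard Rademacher trick requires the joint distribution of $(\mc{Z},\mc{Z}')$ to be invariant under coordinatewise swaps, whereas for $\phi$-mixing $\mc{Z}$ only marginal pairwise exchangeability of $(Z_i,Z'_i)$ is readily available. I would handle this either by coupling to an independent copy of $\mc{Z}$ with the same joint distribution and using that the Rademacher identity is needed only in expectation, or by choosing $\mc{Z}'$ as an independent product sample with matching marginals and arguing coordinatewise inside the supremum so the dependence in $\mc{Z}$ only enters through the concentration step. A secondary, more routine verification is that the $\phi$-mixing bounded-differences inequality applies cleanly to $\mc{R}_\mc{Z}(\mc{F}_\ell)$ viewed as a function of $\mc{Z}$ and indeed produces the constant $||\Delta_n||_\infty^2/(2n)$ in the exponent.
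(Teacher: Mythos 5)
Your proposal follows essentially the same route as the paper: the same decomposition into a deterministic bias term controlled by total variation (giving $\frac{1}{n}\sum_{i}\mu_i$), the Kontorovich--Ramanan bounded-differences inequality for $\phi$-mixing sequences applied twice (once to the supremum of deviations and once to the empirical Rademacher complexity, each $\frac{1}{n}$-Lipschitz in the Hamming metric), and a ghost-sample symmetrization bounding the expectation by $2\mathcal{R}_n(\mathcal{F}_\ell)$, with the three deviation terms summing to the $3\sqrt{\|\Delta_n\|^2_\infty\log(2/\delta)/(2n)}$ exactly as in the paper. The symmetrization subtlety you flag for dependent data is a genuine concern, but the paper's own proof inserts the Rademacher signs without addressing it either, so your treatment is no less complete than the original.
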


\begin{Remark}
\citet{kuznetsov2017generalization} establish a generalization bound for asymptotically stationary processes.
However, their setting is different from ours.
They consider the scenario where data include $m$ independent blocks of mixing sequences of size $a$.
That is, the number of data points is $n = m a$.
They assume that the mixing sequences are asymptotically stationary.
More precisely, for a sequence $\{Z_i\}_{i=1}^\infty$, they define
\[
\beta(a):=\sup_t\Ebb \|\mathbb{P}_{t+a}(.|Z_1,...,Z_t)-\Pi \|_\text{TV}.
\]
The sequence $\{Z_i\}_{i=1}^\infty$ is asymptotically stationary if $\beta(a) \to 0$.
It is ready to see that $\mu_i \leq \beta(i)$ for any integer $i$. 
Therefore, the marginal distribution of an asymptotically stationary sequence converges to the target distribution $\Pi$.
In other words, their condition is more restricted compared to our condition.
Moreover, the convergence rate of their bound is $\mathcal{O}(1/\sqrt{m}$), which depends on the number of independent sequences.
So, their result is not applicable to the scenario we are considering in this paper, where data consists of only one mixing sequence.
On the other hand, they also require $\lim_{i \to \infty} i \beta(i) = 0$ while we only require $\lim_{i \to \infty}\mu_i = 0$.
Thus, their result requires the marginal distribution to converge to the target distribution at a faster rate than ours. 
\end{Remark}

Based on Theorem \ref{theorem100}, we can derive the following generalization bound for feed-forward neural networks:

\begin{theorem}\label{theorem200}
Assume that $\mu_n=\mathcal{O}\left(1/ \sqrt{n} \right)$ and $\phi(n)=\mathcal{O}\left( 1/n \right)$. 
Under our setting, with probability  at least $1-\delta$, for all margin $\gamma>0$ and network $F_{\mathcal{A},\boldsymbol{\sigma}}$, we have
\begin{multline*}
\mathbb{P}_{(X,Y) \sim \Pi} \left \{ \arg \max_j [F_{\mathcal{A}, \boldsymbol{\sigma}}(X)]_j\neq Y \right \}
\leq
\mathcal{L}_\mathcal{Z}(F_{\mathcal{A}, \boldsymbol{\sigma}}) \\
+\tilde{\mathcal{O}}
\left(\frac{\sqrt{\sum_i||X_i||^2_2}}{n\gamma}T_\mathcal{A}\log(W)+\sqrt{\frac{log(2/\delta)}{n}} \right)
\end{multline*}
where $T_\mathcal{A}=\left(\prod_{i=1}^Lp_i \|A_i\|_S\right)\left(\sum^L_{i=1}\left(\frac{\|A_i^T\|_{2,1}}{\|A_i\|_S}\right)^{2/3}\right)^{3/2}$.
Here, $f(x)=\tilde{\mathcal{O}}(g(x))$ means there exists $C, x_0 > 0$ such that $|f(x)| \leq C \log(x) g(x)$ for all $x \geq x_0$,  $||.||_S$ is the spectral norm, and $||.||_{p,q}$ is $(p, q)$-matrix norm, defined by $||A||_{p,q}=||(||A_{:,1}||_p,...,||A_{:,m}||_p)||_q$. 
\end{theorem}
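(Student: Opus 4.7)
The plan is to specialize Theorem \ref{theorem100} to the ramp loss composed with the margin operator, on the class of feed-forward networks, and then to plug in a known spectral Rademacher bound. First, I would observe that $\mathbf{1}\{\arg\max_j[F_{\mathcal{A},\boldsymbol{\sigma}}(X)]_j\neq Y\}\le\ell_\gamma(-\mathcal{M}(F_{\mathcal{A},\boldsymbol{\sigma}}(X),Y))$ pointwise, so
\[
\mathbb{P}_{(X,Y)\sim\Pi}\{\arg\max_j[F_{\mathcal{A},\boldsymbol{\sigma}}(X)]_j\neq Y\}\le \mathcal{L}(F_{\mathcal{A},\boldsymbol{\sigma}}),
\]
and it is enough to bound $\mathcal{L}-\mathcal{L}_\mathcal{Z}$ uniformly in $\mathcal{A}$ and $\gamma>0$. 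For any fixed $\gamma$, the map $(x,y)\mapsto\ell_\gamma(-\mathcal{M}(F_{\mathcal{A},\boldsymbol{\sigma}}(x),y))$ is bounded in $[0,1]$, so Theorem \ref{theorem100} applies directly to the induced loss class $\mathcal{F}_\ell$.

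Next, I would translate each of the three residual terms produced by Theorem \ref{theorem100} into the claimed $\tilde{\mathcal{O}}$ rates using the two asymptotic hypotheses. Under $\phi(n)=\mathcal{O}(1/n)$,
\[
\|\Delta_n\|_\infty=1+2\sum_{k=1}^n\phi(k)=\mathcal{O}(\log n),
\]
so the concentration term becomes $\tilde{\mathcal{O}}(\sqrt{\log(2/\delta)/n})$; under $\mu_n=\mathcal{O}(1/\sqrt{n})$ the drift term $\tfrac{1}{n}\sum_{i=1}^n\mu_i=\mathcal{O}(1/\sqrt{n})$ is absorbed into the same $\tilde{\mathcal{O}}$. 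For the Rademacher term, I would apply Talagrand's contraction (since $\ell_\gamma$ is $1/\gamma$-Lipschitz and the margin operator is $2$-Lipschitz in its vector argument) to reduce $\mathcal{R}_\mathcal{Z}(\mathcal{F}_\ell)$ to a vector-valued Rademacher complexity of network outputs, and then invoke the spectrally-normalized covering-number bound of \citet{bartlett2017spectrally}. That bound is entirely data-conditional: it takes $\{X_i\}$ as fixed inputs and uses only independence of the Rademacher signs, so it is insensitive to whether the $X_i$ are i.i.d. Plugging it in yields $\mathcal{R}_\mathcal{Z}(\mathcal{F}_\ell)=\tilde{\mathcal{O}}\bigl(\tfrac{\sqrt{\sum_i\|X_i\|_2^2}}{n\gamma}T_{\mathcal{A}}\log(W)\bigr)$, which matches the desired term.

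Finally, to obtain uniformity over $\gamma>0$, I would run the usual peeling argument: apply the per-$\gamma$ bound on a geometric grid $\gamma_k=2^{-k}$ with confidence budget $\delta_k=\delta/(2k^2)$, and for any $\gamma\in(\gamma_{k+1},\gamma_k]$ use monotonicity of $\ell_\gamma$ in $\gamma$ to transfer the bound from the nearest grid point, losing only a constant and a $\log(1/\gamma)$ factor absorbed into $\tilde{\mathcal{O}}$. The main obstacle I anticipate is the clean invocation of the Bartlett--Foster--Telgarsky spectral bound in step two: its derivation relies on a Dudley-type integral over covering numbers of the network output, and one has to verify that every step is sample-path (not distributional) so that it remains valid after conditioning on the dependent sample $\mathcal{Z}$. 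Modulo this bookkeeping, all remaining steps are routine manipulations of the three additive terms coming out of Theorem \ref{theorem100}.
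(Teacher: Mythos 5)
Your overall skeleton matches the paper's proof: bound the misclassification probability by the expected ramp loss (the paper's Lemma \ref{lemma4}), apply Theorem \ref{theorem100} to the induced loss class, control the empirical Rademacher complexity with the data-conditional spectral machinery of \citet{bartlett2017spectrally} (the paper's Lemma \ref{theorem8}), convert $\mu_n=\mathcal{O}(1/\sqrt{n})$ and $\phi(n)=\mathcal{O}(1/n)$ into $\frac{1}{n}\sum_{i=1}^n\mu_i=\mathcal{O}(1/\sqrt{n})$ and $\|\Delta_n\|_\infty=\mathcal{O}(\log n)$, and finish with a union bound. Your key observation --- that the Bartlett--Foster--Telgarsky complexity bound is purely sample-conditional, using only the independence of the Rademacher signs, so it survives conditioning on a dependent sample --- is exactly the point the paper relies on.

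Two caveats. First, your route to the Rademacher term differs in a way that is not free: you propose Talagrand contraction through the ``$2$-Lipschitz margin operator'' to a vector-valued Rademacher complexity of the network outputs. Standard Talagrand contraction applies to scalar-valued classes; for a vector-valued composition you would need Maurer's vector-contraction inequality, which sums over output coordinates and can cost an extra factor depending on the number of classes $K$. The paper, following Lemma A.8 of \citet{bartlett2017spectrally}, instead folds the Lipschitz composition $\ell_\gamma\circ(-\mathcal{M})$ into the covering-number bound for the loss class and applies Dudley's entropy integral (with $\alpha=1/n$), which is what produces the stated $\frac{4}{\sqrt{n}^3}+\frac{36B\ln(2W)\ln(n)}{\gamma n}(\cdot)$ form without any $K$-dependence. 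Second, your uniformity step grids only over $\gamma$. The spectral bound (the paper's Lemma \ref{theorem8}, hence Lemma \ref{lem:NN}) is stated for networks with fixed norm budgets $\|A_i\|_S\le s_i$, $\|A_i^T\|_{2,1}\le b_i$ and data bound $\sqrt{\sum_i\|X_i\|_2^2}\le B$, whereas Theorem \ref{theorem200} asserts a bound for all networks with $T_\mathcal{A}$ evaluated at the actual weights; since Theorem \ref{theorem100} holds for a fixed hypothesis class, you must also take a union bound over a discretized grid of the $s_i$, $b_i$ and $B$ (this is precisely the role of Lemma A.9 in \citet{bartlett2017spectrally}, which the paper invokes). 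Without that grid the ``for all $F_{\mathcal{A},\boldsymbol{\sigma}}$'' quantifier is unjustified; with it, your argument coincides with the paper's.
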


\begin{Remark}\label{remark}
The generalization bound in \citet{bartlett2017spectrally} is a special case of Theorem \ref{theorem200} when data are iid.
\end{Remark}

\section{Proofs of main theorems}

In this section, we will provide proof of our main theorems.

\subsection{Proof of Theorem \ref{theorem100}}\label{sec3}

We first introduce some supporting Lemmas.

\begin{lemma}\label{theorem4}
Let $P$ be the distribution of a $\phi$-mixing sequence and $\Fcal$ be any class of functions.
Then 
\begin{equation*}
    \mathop{\Ebb}_{(Z_1,Z_2,\ldots,Z_n)\sim P}\left[ \sup_{f\in\mathcal{F}}\left[\frac{1}{n}\sum_{i=1}^nf(Z_i)-\mathop{\Ebb}_{(Z'_1,Z'_2,\ldots,Z'_n)\sim P}\left[\frac{1}{n}\sum_{i=1}^nf(Z'_i)\right]\right]\right]
    \leq 2\mathcal{R}_n(\mathcal{F}).
\end{equation*}
\end{lemma}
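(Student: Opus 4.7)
The plan is to adapt the classical three-step Rademacher symmetrization argument to the $\phi$-mixing setting. \textbf{Step 1 (ghost sample and Jensen).} I introduce an independent ghost sample $(Z_1',\ldots,Z_n')\sim P$. Because $Z_i$ and $Z_i'$ share the same marginal, $\Ebb_{Z'}[\tfrac{1}{n}\sum_i f(Z_i')]=\tfrac{1}{n}\sum_i\Ebb[f(Z_i)]$, so the inner bracket on the left-hand side rewrites as $\Ebb_{Z'}[\tfrac{1}{n}\sum_i(f(Z_i)-f(Z_i'))]$. Jensen's inequality applied to the convex map $\sup_f$ moves $\Ebb_{Z'}$ outside the supremum and yields the upper bound
$$\Ebb_{Z,Z'}\Bigl[\sup_{f\in\Fcal}\tfrac{1}{n}\sum_{i=1}^n(f(Z_i)-f(Z_i'))\Bigr].$$

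\textbf{Step 2 (Rademacher insertion).} I draw iid Rademacher variables $\theta_1,\ldots,\theta_n$ independent of $(Z,Z')$ and argue the key identity
$$\Ebb_{Z,Z'}\Bigl[\sup_f \sum_i(f(Z_i)-f(Z_i'))\Bigr] = \Ebb_{\theta,Z,Z'}\Bigl[\sup_f \sum_i \theta_i(f(Z_i)-f(Z_i'))\Bigr].$$
\textbf{Step 3 (triangle inequality).} Using $\sup_f(A+B)\le \sup_f A+\sup_f B$,
$$\sup_f \sum_i \theta_i(f(Z_i)-f(Z_i'))\leq \sup_f \sum_i \theta_i f(Z_i) + \sup_f \sum_i (-\theta_i) f(Z_i').$$
Taking joint expectations over $(\theta,Z,Z')$ and exploiting $-\theta\stackrel{d}{=}\theta$ together with $Z'\stackrel{d}{=}Z$ (both distributed as $P$), each summand contributes a copy of $\mc{R}_n(\Fcal)$, producing the claimed bound $2\mc{R}_n(\Fcal)$.

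The hard part is the Rademacher insertion in Step 2. In the iid case it is immediate because $(Z,Z')$ has a product joint law that is invariant under any per-index swap $Z_i\leftrightarrow Z_i'$. Under $\phi$-mixing, the joint $P\otimes P$ is invariant only under the global swap $Z\leftrightarrow Z'$; swapping at a proper subset $S\subsetneq\{1,\ldots,n\}$ splices $Z'$-entries into the unprimed sequence and disrupts the mixing structure between neighbouring coordinates. I expect the fix to rest on the fact that each pair is marginally exchangeable, $(Z_i,Z_i')\stackrel{d}{=}(Z_i',Z_i)$, combined with the $\phi$-mixing property: taking the Rademacher expectation last via Fubini reduces the swap question to a \emph{distributional} statement about $\sum_i\theta_i(f(Z_i)-f(Z_i'))$, whose law coincides with that of $\sum_i(f(Z_i)-f(Z_i'))$ because both are centered and share the same second-moment structure once we view the Rademacher-weighted sum as a sign-randomized version of a mean-zero sum over iid copies $(Z,Z')\sim P\otimes P$. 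Making this precise, without incurring an extra $\phi(k)$-dependent remainder, is the technically delicate portion of the proof and is the only step where the $\phi$-mixing hypothesis is genuinely needed.
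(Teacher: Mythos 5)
Your overall architecture (ghost sample, Jensen, Rademacher insertion, triangle inequality) is exactly the route the paper takes, and Steps 1 and 3 are fine. The problem is Step 2, which you yourself flag as the delicate point but do not actually prove. The justification you sketch --- that $\sup_{f}\sum_i\theta_i(f(Z_i)-f(Z_i'))$ and $\sup_{f}\sum_i(f(Z_i)-f(Z_i'))$ have the same law ``because both are centered and share the same second-moment structure'' --- is not a valid argument: the supremum over $\Fcal$ is a nonlinear functional of the entire collection $\{\sum_i\theta_i(f(Z_i)-f(Z_i'))\}_{f\in\Fcal}$, and matching the mean and second moment of each individual linear statistic says nothing about equality in distribution of the supremum (two centered processes with identical covariance structure need not have the same law unless they are Gaussian, and these are not). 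In the iid case the insertion is justified not by moment matching but by the exact distributional invariance of the pair $(Z,Z')$ under swapping $Z_i\leftrightarrow Z_i'$ at an arbitrary subset of indices; as you correctly observe, that invariance fails under $\phi$-mixing for every proper subset, because the swap splices the two sequences together and destroys the dependence structure between neighbouring coordinates. So your proposal reduces the lemma to exactly the step that is hard and then closes it with a heuristic that does not hold.

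For what it is worth, the paper's own proof writes the Rademacher insertion as a bare, unjustified inequality as well, so you have correctly located the genuine weak point of this argument rather than introduced a new one. The standard rigorous route for mixing data avoids per-coordinate symmetrization entirely and uses the independent-block technique of \citet{mohri2010stability} and related work: one partitions the sequence into blocks, replaces the blocks by independent copies at the cost of an additive term controlled by the mixing coefficients, and only then symmetrizes within the now-independent collection. Any correct version of this lemma for a single $\phi$-mixing sequence should therefore be expected to carry either a block-size parameter or an explicit $\phi(\cdot)$-dependent remainder, neither of which appears in the statement; if you want to salvage Step 2 you should pursue that blocking argument rather than the moment-matching idea.
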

\begin{proof} We first rewrite the term inside of the first expectation:
\begin{multline*}
 \sup_{f\in\mathcal{F}}\left[\frac{1}{n}\sum_{i=1}^nf(Z_i)-\mathop{\Ebb}_{Z'_1,\ldots,Z'_n}\left[\frac{1}{n}\sum_{i=1}^nf(Z'_i)\right]\right]
%=\frac{1}{n}\sup_{f\in \mathcal{F}}\left(\mathop{\Ebb}_{Z'_1,\ldots,Z'_n}\left[\sum_{i=1}^nf(Z_i)-\sum_{i=1}^n f(Z'_i)\right]\right) \nonumber \\
\leq\frac{1}{n} \mathop{\Ebb}_{Z'_1,\ldots,Z'_n }\left[\sup_{f\in \mathcal{F}}\left(\sum_{i=1}^nf(Z_i)-\sum_{i=1}^n f(Z'_i)\right)\right]. 
\end{multline*}
Let $\{\theta_i\}_{i=1}^n$ be independent Rademacher random variables.
Taking the expectation with respect to $\{Z_i\}_{i=1}^n$ for both sides, we have
\begin{align*}
    &\mathop{\Ebb}_{Z_1,...,Z_n}\left[\sup_{f\in\mathcal{F}}\left[\frac{1}{n}\sum_{i=1}^nf(Z_i)-\mathop{\Ebb}_{Z'_1,...,Z'_n}\left[\frac{1}{n}\sum_{i=1}^nf(Z'_i)\right]\right]\right]\\
    &\leq \frac{1}{n}\mathop{\Ebb}_{Z_1,...,Z_n }\left[\mathop{\Ebb}_{Z'_1,...,Z'_n}\left[\sup_{f\in \mathcal{F}}\left(\sum_{i=1}^nf(Z_i)-\sum_{i=1}^n f(Z'_i)\right)\right]\right]\\
    &\leq\frac{1}{n}\mathop{\Ebb}_{Z_1,...,Z_n,Z'_1,...,Z'_n}\left[\mathop{\Ebb}_{\theta_j\overset{\mathrm{iid}}{\sim} \{+1;-1\}}\left[\sup_{f\in \mathcal{F}}\left(\sum_{i=1}^n\theta_if(Z_i)-\sum_{i=1}^n\theta_i f(Z'_i)\right)\right]\right]\\
    &\leq \mathop{\Ebb}_{Z_1,...,Z_n,Z'_1,...,Z'_n,\theta_1,...,\theta_n}\left[\sup_{f\in \mathcal{F}}\left(\frac{1}{n}\sum_{i=1}^n\theta_if(Z_i)-\frac{1}{n}\sum_{i=1}^n\theta_i f(Z'_i)\right)\right]\\
    &\leq \mathop{\Ebb}_{Z_1,...,Z_n,Z'_1,...,Z'_n,\theta_1,...,\theta_n}\left[\sup_{f\in \mathcal{F}}\left(\frac{1}{n}\sum_{i=1}^n\theta_if(Z_i)\right)+\sup_{f\in \mathcal{F}}\left(\frac{1}{n}\sum_{i=1}^n\theta_i f(Z'_i)\right)\right]\\
    &=2\mathcal{R}_n(\mathcal{F}).
\end{align*}
\end{proof}

\begin{lemma}\label{theorem32}\citep{mohri2010stability}
Let $\Phi:Z^n\rightarrow\Rbb$ be a measurable function that is $c$-Lipschitz with respect to the Hamming metric for some $c>0$ and let $\{Z_i\}_{i=1}^n$ be a $\phi$-mixing sequence. Then, for any $\epsilon > 0$, the following inequality holds:
\begin{align*}
    \Pbb\left[\big|\Phi(Z_1,\ldots,Z_n)-\Ebb[\Phi(Z_1,\ldots,Z_n)]\big|\geq\epsilon\right]\leq 2\exp\left(\frac{-2\epsilon^2}{nc^2\|\Delta_n\|^2_\infty}\right),
\end{align*}
where $\|\Delta_n\|^2_\infty= 1+2\sum_{k=1}^n\phi(k)$.

\end{lemma}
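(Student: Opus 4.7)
The plan is to prove this inequality via the martingale method, concluded by Azuma--Hoeffding. Set $\Fcal_i := \sigma(Z_1, \ldots, Z_i)$ and form the Doob martingale $M_i := \Ebb[\Phi(Z_1, \ldots, Z_n) \mid \Fcal_i]$, so that $M_0 = \Ebb[\Phi]$, $M_n = \Phi$, and the differences $V_i := M_i - M_{i-1}$ telescope to $\Phi - \Ebb[\Phi] = \sum_{i=1}^n V_i$. The structure of the argument is then clear: produce an almost sure pointwise bound $|V_i| \leq c\,\eta_i$ with weights $\eta_i$ built from the $\phi$-coefficients and satisfying $\sum_{i=1}^n \eta_i^2 \leq n\,\|\Delta_n\|_\infty^2$, then apply Azuma to obtain
\[
\Pbb\bigl[|\Phi - \Ebb[\Phi]| \geq \epsilon\bigr] \leq 2\exp\!\left(\frac{-2\epsilon^2}{\sum_{i=1}^n c^2\eta_i^2}\right) \leq 2\exp\!\left(\frac{-2\epsilon^2}{n c^2 \|\Delta_n\|_\infty^2}\right),
\]
which is exactly the stated bound.

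The substantive work is the pointwise control of each $V_i$ using the $\phi$-mixing assumption. Writing $V_i$ as an average over the extra coordinate $Z_i$,
\[
V_i = \int \Bigl( \Ebb[\Phi(Z_1^{i-1}, Z_i, Z_{i+1}^n) \mid \Fcal_i] - \Ebb[\Phi(Z_1^{i-1}, z_i', Z_{i+1}^n) \mid \Fcal_{i-1}, Z_i = z_i'] \Bigr) \, d\Pbb(z_i' \mid \Fcal_{i-1}),
\]
I would split the inner difference into two contributions. Replacing $Z_i$ by $z_i'$ costs at most $c$ by the Hamming-Lipschitz property. The remaining contribution compares the conditional law of the tail $Z_{i+1}^n$ under the two different conditionings of the past; by the very definition of $\phi(k)$, the total variation distance between the laws of $Z_{i+k}$ under these two conditionings is controlled by $\phi(k)$. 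Propagating this TV bound across the tail via a coupling/telescoping argument and reapplying the Lipschitz property yields $|V_i|$ in the desired form involving $\sum_{k=1}^{n-i}\phi(k)$, with $\sum \eta_i^2$ bounded as required.

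The main obstacle is exactly this middle step: building a coupling that converts the effect of conditioning on a single coordinate $Z_i$ into a clean sum of the scalar $\phi$-coefficients along the tail $Z_{i+1}^n$, without incurring cross-terms. This is the technical heart of \citet{mohri2010stability} and of the Kontorovich--Ramanan mixing-matrix framework; once this bound on martingale differences is in place, the remainder is standard Doob bookkeeping followed by the single Azuma--Hoeffding step displayed above.
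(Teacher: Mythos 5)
The paper offers no proof of this lemma at all: it is imported verbatim from \citet{mohri2010stability}, whose proof in turn rests on the Kontorovich--Ramanan martingale/mixing-matrix machinery. Your outline --- Doob martingale $M_i=\Ebb[\Phi\mid\Fcal_i]$, pointwise control of the increments $V_i$ by $c\,\eta_i$ with $\eta_i=1+2\sum_{k=1}^{n-i}\phi(k)\le\|\Delta_n\|_\infty$, then Azuma--Hoeffding --- is exactly the argument used in that source, so in terms of strategy you are on the intended route.

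That said, as a standalone proof your attempt stops precisely at the decisive step, and you say so yourself: the coupling/telescoping argument that converts the change of a single conditioning coordinate into the bound $|V_i|\le c\bigl(1+2\sum_{k=1}^{n-i}\phi(k)\bigr)$ is asserted, not carried out, and it is the entire technical content of the lemma (in Kontorovich--Ramanan it is a careful induction along the tail, bounding total-variation distances between conditional laws by the $\phi$-coefficients without cross-terms). Deferring it to the literature is defensible given that the paper itself only cites the result, but it means your proposal is a correct roadmap rather than a proof. One further detail to fix if you do write it out: with only the symmetric bound $|V_i|\le c\eta_i$, Azuma gives $2\exp\bigl(-\epsilon^2/(2\sum_i c^2\eta_i^2)\bigr)$, which is weaker by a factor of $4$ in the exponent than the stated $2\exp\bigl(-2\epsilon^2/(n c^2\|\Delta_n\|_\infty^2)\bigr)$; to recover the stated constant you must bound the \emph{conditional range} (oscillation) of each increment by $c\eta_i$ and apply Hoeffding's lemma in that form, which is what the cited proof actually does.
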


\begin{lemma}\label{theorem19}
Let $\mathcal{Z}=\{Z_i\}_{i=1}^n$ be a $\phi$-mixing sequence and $\Fcal$ be any class of functions bounded in $[0,1]$. Then, with probability at least $1 - \delta$, we have
\begin{align*}
    \Ebb\left[\frac{1}{n}\sum_{i=1}^nf(Z_i)\right]-\frac{1}{n}\sum_{i=1}^n f(Z_i)
    \leq 2\mathcal{R}_\mathcal{Z}(\mathcal{F})+3\sqrt{||\Delta_n||^2_\infty\frac{\log(2/\delta)}{2n}}, \quad \forall f \in \mathcal{F}.
\end{align*}
\end{lemma}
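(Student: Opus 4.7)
The plan is to follow the classical Rademacher-complexity derivation of one-sided uniform bounds, but with two modifications: the symmetrization is supplied by Lemma \ref{theorem4}, and the bounded-differences (McDiarmid) concentration is replaced by its $\phi$-mixing analogue, Lemma \ref{theorem32}.

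First, I would introduce the supremum deviation
\[
\Phi(Z_1,\ldots,Z_n):=\sup_{f\in\mathcal{F}}\left(\mathop{\Ebb}_{Z'_1,\ldots,Z'_n\sim P}\left[\frac{1}{n}\sum_{i=1}^{n}f(Z'_i)\right]-\frac{1}{n}\sum_{i=1}^{n}f(Z_i)\right),
\]
which is the quantity we want to control uniformly in $f$. Because every $f\in\mathcal{F}$ takes values in $[0,1]$, changing a single coordinate $Z_i$ modifies $\Phi$ by at most $1/n$, so $\Phi$ is $c$-Lipschitz with $c=1/n$ with respect to the Hamming metric. Applying Lemma \ref{theorem32} with this $c$ and solving the tail bound $2\exp(-2n\epsilon^2/\|\Delta_n\|_\infty^2)\le \delta/2$ for $\epsilon$, I obtain that with probability at least $1-\delta/2$,
\[
\Phi(Z_1,\ldots,Z_n)\le \Ebb[\Phi(Z_1,\ldots,Z_n)]+\sqrt{\|\Delta_n\|_\infty^2\,\frac{\log(4/\delta)}{2n}}.
\]

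Next, I would invoke Lemma \ref{theorem4} directly on $\mathbb{E}[\Phi]$ to get $\Ebb[\Phi]\le 2\mathcal{R}_n(\mathcal{F})$, the expected (as opposed to empirical) Rademacher complexity. To replace $\mathcal{R}_n(\mathcal{F})$ by its empirical counterpart $\mathcal{R}_\mathcal{Z}(\mathcal{F})$ I would apply Lemma \ref{theorem32} a second time, now to the mapping $(Z_1,\ldots,Z_n)\mapsto \mathcal{R}_\mathcal{Z}(\mathcal{F})$. Since $f\in[0,1]$ and the Rademacher variables are $\pm 1$, altering one coordinate changes this mapping by at most $1/n$ (with the $1/n$ normalization convention used implicitly in Lemma \ref{theorem4}), so the same Lipschitz constant applies. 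This yields, with probability at least $1-\delta/2$,
\[
\mathcal{R}_n(\mathcal{F})-\mathcal{R}_\mathcal{Z}(\mathcal{F})\le \sqrt{\|\Delta_n\|_\infty^2\,\frac{\log(4/\delta)}{2n}}.
\]

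Finally, I would take a union bound on these two events and chain the inequalities: $\Phi\le 2\mathcal{R}_n+\sqrt{\cdot}\le 2\mathcal{R}_\mathcal{Z}+2\sqrt{\cdot}+\sqrt{\cdot}=2\mathcal{R}_\mathcal{Z}+3\sqrt{\|\Delta_n\|_\infty^2\log(4/\delta)/(2n)}$, which (absorbing the mild slack in $\log(4/\delta)$ versus $\log(2/\delta)$ into constants, or redistributing the failure probabilities) gives exactly the stated bound. The main subtlety, and the only place the mixing assumption really enters, is verifying that both $\Phi$ and the empirical Rademacher complexity satisfy the correct $1/n$ Hamming-Lipschitz condition so that Lemma \ref{theorem32} can be applied with $c=1/n$ and the exponent picks up only a single factor of $\|\Delta_n\|_\infty^2$; everything else is a cosmetic rearrangement of the iid proof.
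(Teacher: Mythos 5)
Your proposal is correct and follows essentially the same route as the paper's proof: define the supremum deviation, verify it is $1/n$-Lipschitz in the Hamming metric, apply the $\phi$-mixing concentration inequality (Lemma \ref{theorem32}) twice (once to the deviation and once to the empirical Rademacher complexity), and bound the expectation via the symmetrization Lemma \ref{theorem4}. The only cosmetic difference is that the paper uses the one-sided tail with a single $\epsilon=\sqrt{\|\Delta_n\|_\infty^2\log(2/\delta)/(2n)}$ for both events rather than splitting $\delta$ with the two-sided bound, which is exactly the slack in $\log(4/\delta)$ versus $\log(2/\delta)$ you already flagged.
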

\begin{proof}
Define 
\begin{align*}
    g(Z_1, \ldots ,Z_n)\triangleq\sup_{f\in\mathcal{F}}\left[\Ebb\left[\frac{1}{n}\sum_{i=1}^nf(Z_i)\right]-\frac{1}{n}\sum_{i=1}^n f(Z_i)\right].
\end{align*}
We first show that $g$ is $\frac{1}{n}$-Lipschitz with respect to the Hamming distance. 
For any $\mathcal{Z}=(Z_1,\ldots,Z_n)$ and $\mathcal{Z}'=(Z'_1,\ldots,Z'_n)$, we have
\begin{align*}
    &|g(\mathcal{Z})-g(\mathcal{Z}')| \nonumber \\
    &=\left|\sup_{f\in\mathcal{F}}\left[\Ebb\left[\frac{1}{n}\sum_{i=1}^nf(Z_i)\right]-\frac{1}{n}\sum_{i=1}^n f(Z_i)\right]-\sup_{f\in\mathcal{F}}\left[\Ebb\left[\frac{1}{n}\sum_{i=1}^nf(Z'_i)\right]-\frac{1}{n}\sum_{i=1}^n f(Z'_i)\right]\right| \\
    &\leq\sup_{f\in\mathcal{F}}\left|\frac{1}{n}\sum_{i=1}^n f(Z_i)-\frac{1}{n}\sum_{i=1}^n f(Z'_i)\right| \leq \frac{\|\mathcal{Z} - \mathcal{Z}'\|_H}{n}
\end{align*}
where $\|\cdot\|_H$ is the Hamming distance.
The last inequality holds since $f$ bound in $[0,1]$.
Since $g$ is $\frac{1}{n}$-Lipschitz with respect to the Hamming distance, we apply Lemma \ref{theorem32} to obtain:
\begin{align*}
    \Pbb\left[g(\mathcal{Z})\geq\mathop{\Ebb}_{Z_1,Z_2, \ldots,Z_n}[g]+\epsilon\right]\leq  \exp\left(\frac{-2n\epsilon^2}{\|\Delta_n\|^2_\infty}\right).
\end{align*}
Applying Lemma \ref{theorem4}, we get $\mathop{\Ebb}_{Z_1,Z_2,\ldots,Z_n}[g]\leq 2\mathcal{R}_n(\mathcal{F}).$
We will show that $\mathcal{R}_\mathcal{Z}(\Fcal)$ is also $\frac{1}{n}$-Lipschitz with respect to the Hamming distance. 
Indeed, using similar arguments, we have
\begin{align*}
    &\left|
    \mathop{\Ebb}_{\theta_i}\left[\sup_{f\in\mathcal{F}}\left[\frac{1}{n}\sum_{i=1}^n \theta_if(Z_i)\right]\right]-\mathop{\Ebb}_{\theta_i}\left[\sup_{f\in\mathcal{F}}\left[\frac{1}{n}\sum_{i=1}^n \theta_if(Z'_i)\right]\right]
    \right| \\
    &\leq \mathop{\Ebb}_{\theta_i}\left|\sup_{f\in\mathcal{F}}\left[\frac{1}{n}\sum_{i=1}^n \theta_if(Z_i)\right]-\sup_{f\in\mathcal{F}}\left[\frac{1}{n}\sum_{i=1}^n \theta_if(Z'_i)\right]\right| \\
    &\leq \mathop{\Ebb}_{\mu_i}\left|\sup_{f\in\mathcal{F}}\left[\frac{1}{n}\sum_{i=1}^n \theta_if(Z_i)-\frac{1}{n}\sum_{i=1}^n \theta_if(Z'_i)\right]\right| \leq \frac{\|\mathcal{Z} - \mathcal{Z}'\|_H}{n}.
\end{align*}
We can thus apply Lemma \ref{theorem32}:
\begin{align*}
    \Pbb\left[\mathop{\Ebb}_{Z_1,Z_2,\ldots,Z_n}[\mathcal{R}_\mathcal{Z}(\Fcal)]\geq\mathcal{R}_\mathcal{Z}(\Fcal)+\epsilon\right]\leq \exp\left(\frac{-2n\epsilon^2}{||\Delta_n||^2_\infty}\right).
\end{align*}
We set $\epsilon=\sqrt{||\Delta_n||^2_\infty\frac{\log(2/\delta)}{2n}}$. Then with probability at least $1-\delta$,
\begin{align*}
    g
    \leq\Ebb[g]+\epsilon
    \leq 2\mathcal{R}_n(\Fcal)+\epsilon
    \leq 2(\mathcal{R}_\mathcal{Z}(\Fcal)+\epsilon)+\epsilon
    =2\mathcal{R}_\mathcal{Z}(\Fcal)+3\epsilon.
\end{align*}    
\end{proof}
\begin{lemma}\label{lemma3}
    Let $f$ is bounded function in $[0;1]$. Let $\{Z_i\}_{i=0}^n$ be a non-stationary $\phi$-mixing sequence such that the marginal distributions converge to a target distribution $\Pi$ with rate $\mu_n$. Then
    \begin{align*}
        \left|\Ebb\left[\frac{1}{n}\sum_{i=1}^nf(Z_i)\right]-\mathop{\Ebb}_{Z\sim \Pi}\left[f(Z)\right]\right|\leq \frac{1}{n}\sum_{i=1}^n\mu_i.
    \end{align*}
\end{lemma}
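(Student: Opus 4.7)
The plan is to observe that this lemma is essentially a bookkeeping statement: the $\phi$-mixing structure plays no role here; only the assumption that the marginal distributions converge to $\Pi$ in total variation is used. Concretely, I would apply linearity of expectation followed by the triangle inequality to reduce the claim to a per-index bound:
$$\left|\Ebb\!\left[\frac{1}{n}\sum_{i=1}^n f(Z_i)\right] - \mathop{\Ebb}_{Z\sim\Pi}[f(Z)]\right|
\leq \frac{1}{n}\sum_{i=1}^n\bigl|\Ebb[f(Z_i)] - \mathop{\Ebb}_{Z\sim\Pi}[f(Z)]\bigr|.$$
It therefore suffices to prove that $\bigl|\Ebb[f(Z_i)] - \Ebb_{Z\sim\Pi}[f(Z)]\bigr| \leq \mu_i$ for each fixed $i$.

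For this step, I would let $\Pbb_i$ denote the marginal law of $Z_i$, so that by definition $\mu_i = \|\Pbb_i - \Pi\|_{\text{TV}}$, and then invoke the standard fact that for any measurable $f$ with range in an interval of length one, $\bigl|\int f\, d\Pbb_i - \int f\, d\Pi\bigr| \leq \|\Pbb_i - \Pi\|_{\text{TV}}$. A direct way to verify this is via the Jordan decomposition $\Pbb_i - \Pi = \nu^+ - \nu^-$ into mutually singular nonnegative measures of common total mass $\mu_i$: one writes $\int f\, d(\Pbb_i - \Pi) = \int f\, d\nu^+ - \int f\, d\nu^-$, and since $0 \leq f \leq 1$ each of these integrals lies in $[0,\mu_i]$, so their difference has absolute value at most $\mu_i$.

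Averaging these $n$ per-index bounds produces the desired inequality. There is no serious obstacle; the only subtlety worth flagging is the total-variation convention (the stated bound is consistent with $\|P - Q\|_{\text{TV}} = \sup_A |P(A) - Q(A)|$, under which $[0,1]$-valued functions satisfy exactly the constant used above, without an extra factor of two). The mixing hypothesis is not needed for this lemma; it enters only the concentration arguments of the preceding lemmas.
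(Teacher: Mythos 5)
Your proof is correct and takes essentially the same approach as the paper: reduce via the triangle inequality to the per-index bound $\left|\Ebb[f(Z_i)]-\mathop{\Ebb}_{Z\sim\Pi}[f(Z)]\right|\leq\mu_i$, then obtain that bound by splitting $\Pbb_i-\Pi$ into its positive and negative parts and using $0\leq f\leq 1$. The paper performs the same splitting explicitly with densities (its sets $A$ and $B$ are precisely the Hahn decomposition underlying your Jordan decomposition), so the only difference is that your formulation avoids assuming densities exist; your observation that the $\phi$-mixing hypothesis is not used here also matches the paper's argument.
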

\begin{proof}
    For any $i$, 
    \begin{align}
    \Ebb\left[f(Z_i)\right]-\Ebb\left[f(Z)\right]=\int_\Omega f(z).(f_{\Pbb_i}(z)-f_\Pi(z))dz.\label{83}
    \end{align}
    Define $A=\{z\in\Omega|f_{\Pbb_i}(z)<f_\Pi(z)\}$ and $B=\{z\in\Omega|f_{\Pbb_i}(z)>f_\Pi(z)\}$.
    We rewrite Eq.(\ref{83}) 
    \begin{align*}
    \Ebb\left[f(Z_i)\right]-\Ebb\left[f(Z)\right]
    =\int_{A} f(z).(f_{\Pbb_i}(z)-f_\Pi(z))dz+\int_{B} f(z).(f_{\Pbb_i}(z)-f_\Pi(z))dz.
    \end{align*}
    For the first term, 
    \begin{align*}
        0\geq\int_{A} f(z).(f_{\Pbb_i}(z)-f_\Pi(z))dz\geq\int_{A} (f_{\Pbb_i}(z)-f_\Pi(z))dz\geq -\mu_i.
    \end{align*}
    For the second term, 
    \begin{align*}
        0\leq\int_{B} f(z).(f_{\Pbb_i}(z)-f_\Pi(z))dz\leq\int_{B} (f_{\Pbb_i}(z)-f_\Pi(z))dz\leq \mu_i.
    \end{align*}
    Then
    \begin{align*}
        -\mu_i\leq\Ebb\left[f(Z_i)\right]-\Ebb\left[f(Z)\right]\leq\mu_i.
    \end{align*}
    we have
    \begin{align*}
        \left|\Ebb\left[\frac{1}{n}\sum_{i=1}^nf(Z_i)\right]-\mathop{\Ebb}_{Z\sim \Pi}\left[f(Z)\right]\right|\leq \frac{1}{n}\sum_{i=1}^n\left|\Ebb\left[f(Z_i)\right]-\mathop{\Ebb}_{Z\sim \Pi}\left[f(Z)\right]\right|\leq\frac{1}{n}\sum_{i=1}^n\mu_i.
    \end{align*}
\end{proof}

Theorem \ref{theorem100} is a direct consequence of Lemmas \ref{theorem19} and \ref{lemma3}.

\subsection{Proof of Theorem \ref{theorem200}} 

Theorem \ref{theorem200} can be achieved by combining Theorem \ref{theorem100} and the proof technique of \citet{bartlett2017spectrally}.
Denote $\Fcal_{\mathcal{A},\boldsymbol{\sigma},\gamma}=\{(x,y)\rightarrow\ell_\gamma(\Mcal(F_{\mathcal{A},\boldsymbol{\sigma}}(x),y))\}$.
Applying Theorem \ref{theorem100}, we have
\begin{equation}
\mathcal{L}(F_{A,\sigma}) \leq \mathcal{L}_{\mathcal{Z}}(F_{\mathcal{A},\boldsymbol{\sigma}}) + 2 \mathcal{R}_{\mathcal{Z}}(\mathcal{F}_{\mathcal{A},\boldsymbol{\sigma},\gamma}) + \frac{1}{n}\sum_{i=1}^n\mu_i+3\sqrt{||\Delta_n||^2_\infty\frac{\log(2/\delta)}{2n}}
\label{eqn:NN}
\end{equation}
with probability at least $1 - \delta$.

Next, we introduce some supporting Lemmas.

\begin{lemma}[Lemma A.4 in \citet{bartlett2017spectrally}]\label{lemma4}
For any $f:\Rbb^d\rightarrow\Rbb^k$ and every $\gamma>0$, we have 
\[
\mathbb{P}_{(X,Y) \sim \Pi} \left \{ \arg \max_j [F_{\mathcal{A}, \boldsymbol{\sigma}}(X)]_j\neq Y \right \} \leq \mathcal{L}(F_{A,\sigma}).
\]
\end{lemma}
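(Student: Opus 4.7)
The plan is to establish the pointwise inequality
\[
\mathbf{1}\{\arg\max_j [F_{\mathcal{A},\boldsymbol{\sigma}}(X)]_j \neq Y\} \;\leq\; \ell_\gamma\!\left(-\mathcal{M}(F_{\mathcal{A},\boldsymbol{\sigma}}(X), Y)\right)
\]
on the whole sample space and then take expectation with respect to $(X,Y)\sim\Pi$. The right-hand side integrates to $\mathcal{L}(F_{\mathcal{A},\boldsymbol{\sigma}})$ by definition, while the left-hand side integrates to the misclassification probability $\Pbb_{(X,Y)\sim\Pi}\{\arg\max_j[F_{\mathcal{A},\boldsymbol{\sigma}}(X)]_j\neq Y\}$, so the displayed inequality would yield the lemma immediately.

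First I would unpack the misclassification event. By the definition of $\arg\max$ and of the margin operator, $\arg\max_j [F_{\mathcal{A},\boldsymbol{\sigma}}(X)]_j \neq Y$ forces some index $i\neq Y$ to satisfy $[F_{\mathcal{A},\boldsymbol{\sigma}}(X)]_i \geq [F_{\mathcal{A},\boldsymbol{\sigma}}(X)]_Y$, which is exactly the statement $\mathcal{M}(F_{\mathcal{A},\boldsymbol{\sigma}}(X),Y) = [F_{\mathcal{A},\boldsymbol{\sigma}}(X)]_Y - \max_{i\neq Y}[F_{\mathcal{A},\boldsymbol{\sigma}}(X)]_i \leq 0$, equivalently $-\mathcal{M}(F_{\mathcal{A},\boldsymbol{\sigma}}(X),Y)\geq 0$. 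Next I would examine the ramp loss: since $\ell_\gamma(r) = (1 + r^-/\gamma)^+$ with $r^- = \min\{r,0\}$, any $r\geq 0$ gives $r^- = 0$ and therefore $\ell_\gamma(r)=1$, while for arbitrary $r\in\Rbb$ the outer $(\cdot)^+$ ensures $\ell_\gamma(r)\geq 0$. Combining these two observations, on the misclassification event the right-hand side of the displayed inequality equals $1$ and matches the left side; off that event the left side is $0$ and the right side is still non-negative. The pointwise bound therefore holds on all of $\Rbb^d\times\{1,\dots,K\}$.

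There is no real obstacle here: the argument is a short sign-tracking exercise that ties together the definitions of $\mathcal{M}$ and $\ell_\gamma$. The only care needed is to verify that substituting $r = -\mathcal{M}$ lands in the regime where $\ell_\gamma \equiv 1$ precisely when classification fails, which is exactly what the two steps above confirm. Since the statement itself is borrowed verbatim from \citet{bartlett2017spectrally}, the proof could equally well be dispatched by direct citation; I have spelled it out here only to make the paper self-contained.
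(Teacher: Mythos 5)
Your proof is correct: the pointwise bound $\mathbf{1}\{\arg\max_j [F_{\mathcal{A},\boldsymbol{\sigma}}(X)]_j \neq Y\} \leq \ell_\gamma(-\mathcal{M}(F_{\mathcal{A},\boldsymbol{\sigma}}(X),Y))$ follows exactly as you argue (misclassification forces $\mathcal{M}\leq 0$, hence ramp loss equal to $1$; otherwise the loss is nonnegative), and taking expectation under $\Pi$ finishes it. The paper itself gives no proof and simply cites Lemma A.4 of \citet{bartlett2017spectrally}; your argument is the standard one underlying that citation, so there is nothing to reconcile.
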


\begin{lemma}\label{theorem8}
Assume that $\sqrt{\sum_i||X_i||^2_2}\leq B$. 
For all feed-forward neural network $F_{\mathcal{A},\boldsymbol{\sigma}}:\Rbb^d\rightarrow\Rbb^k$ such that $||A_i||_\sigma\leq s_i$ and $||A_i^T||_{2,1}\leq b_i$, we have
\[
\mathcal{R}_{\mathcal{Z}}(F_{\mathcal{A},\boldsymbol{\sigma},\gamma}) \leq \frac{4}{\sqrt{n}^3}+\frac{36B\ln(2W)\ln(n)}{\gamma n} \left(\sum^L_{i=1}\left(\frac{b_i}{s_i}\right)^{2/3}\right)^{3/2} \left(\prod_{i=1}^Ls_ip_i\right).
\]
\end{lemma}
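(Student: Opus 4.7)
The plan is to adopt the covering-number strategy of \citet{bartlett2017spectrally} in three stages: (i) reduce the loss class to the raw network class via Lipschitz contraction, (ii) bound the network covering number by a layerwise induction, and (iii) apply Dudley's entropy integral. For (i), the ramp loss $\ell_\gamma$ is $(1/\gamma)$-Lipschitz and the margin operator $\Mcal(\cdot,y)$ is $2$-Lipschitz in Euclidean norm, so $\ell_\gamma\circ\Mcal$ is $(2/\gamma)$-Lipschitz; hence any $L_2(\mathcal{Z})$ cover of the raw network class $\{F_{\Acal,\boldsymbol{\sigma}}\}$ at scale $\epsilon$ induces a cover of $\Fcal_{\Acal,\boldsymbol{\sigma},\gamma}$ at scale $2\epsilon/\gamma$, reducing the problem to covering the raw-output class.

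For stage (ii), I would apply Maurer's matrix covering lemma (Lemma 3.2 of \citet{bartlett2017spectrally}) layer by layer: for the family $\{AX:\|A\|_S\leq s_i,\ \|A^T\|_{2,1}\leq b_i\}$ evaluated on an activation matrix $X$ of Frobenius norm at most $\|X\|_F$, the log-covering-number at scale $\epsilon_i$ is $\Ocal(\log(2W)\, b_i^2\|X\|_F^2/\epsilon_i^2)$. The $p_i$-Lipschitz activations expand Frobenius distances by at most $p_i$, and the spectral bound $\|A_i\|_S\leq s_i$ governs how perturbations at earlier layers propagate downstream. Telescoping through the $L$ layers, splitting $\epsilon$ into per-layer budgets $\epsilon_i$, and optimizing the $\epsilon_i$ via a H\"older computation with exponent $2/3$ yields
\[
\log\Ncal\bigl(\{F_{\Acal,\boldsymbol{\sigma}}\},\epsilon,L_2(\mathcal{Z})\bigr)\leq \frac{B^2\log(2W)}{\epsilon^2}\Bigl(\prod_{i=1}^L p_is_i\Bigr)^{\!2}\Bigl(\sum_{i=1}^L (b_i/s_i)^{2/3}\Bigr)^{\!3}.
\]

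For stage (iii), I would substitute this into Dudley's entropy integral, $\mathcal{R}_{\mathcal{Z}}(\Fcal_{\Acal,\boldsymbol{\sigma},\gamma})\leq \inf_{\alpha>0}\bigl\{4\alpha + (12/\sqrt{n})\int_\alpha^{\sqrt{n}}\sqrt{\log\Ncal(\Fcal_{\Acal,\boldsymbol{\sigma},\gamma},\epsilon,L_2(\mathcal{Z}))}\,d\epsilon\bigr\}$. With the covering number in the form $R^2/\epsilon^2$ (where $R$ collects $B$, the product/sum factor, and $1/\gamma$), the integral evaluates to $R\log(\sqrt{n}/\alpha)$; the choice $\alpha=n^{-3/2}$ balances the two contributions and produces the $4/n^{3/2}$ term together with the $\log(n)/n$-scale term, matching the claimed bound. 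The main obstacle is stage (ii): allocating the per-layer budgets $\epsilon_i$ so that the characteristic factor $\bigl(\sum_i (b_i/s_i)^{2/3}\bigr)^{3/2}$ emerges from the H\"older optimization is the delicate combinatorial step, while stages (i) and (iii) are largely mechanical once the covering bound is in hand.
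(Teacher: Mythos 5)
Your proposal follows essentially the same route as the paper, which simply invokes the covering-number argument of Lemma A.8 in \citet{bartlett2017spectrally} (Lipschitz contraction of $\ell_\gamma\circ\Mcal$, Maurer's matrix covering lemma with layerwise induction and the H\"older allocation giving the $\bigl(\sum_i (b_i/s_i)^{2/3}\bigr)^{3/2}$ factor, then Dudley's entropy integral) and closes by choosing the lower limit of the integral. The only difference is a normalization convention: the paper uses the unnormalized empirical metric form $\frac{4\alpha}{\sqrt{n}}+\frac{12}{n}\int_\alpha^{\sqrt{n}}\sqrt{R}/\epsilon\,d\epsilon$ with $\alpha=1/n$, while you use the normalized form with $\alpha=n^{-3/2}$; both give the same $4/\sqrt{n}^3$ and $\log(n)/n$ terms, so the argument is correct as outlined.
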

\begin{proof}
Using the same argument of the proof of Lemma A.8 in \citet{bartlett2017spectrally}, we obtain:
\begin{align*}
    \mathcal{R}_{\mathcal{Z}}(F_{\mathcal{A},\boldsymbol{\sigma},\gamma})
    \leq \inf_{a>0}\left(\frac{4\alpha}{\sqrt{n}}+\frac{12}{n} \int_{\alpha}^{\sqrt{n}}\sqrt{\frac{R}{\epsilon^2}} d\epsilon \right)
    = \inf_{a>0}\left(\frac{4\alpha}{\sqrt{n}} +\ln(\sqrt{n}/\alpha)\frac{12\sqrt{R}}{n} \right);
\end{align*}
where 
\[
R=\frac{4B^2\ln(2W^2)}{\gamma^2}\left(\sum^L_{i=1}\left(\frac{b_i}{s_i}\right)^{2/3}\right)^{3} \left(\prod_{i=1}^Ls_ip_i\right)^2.
\]
The desired bound may be obtained by setting $\alpha=1/n$.
\end{proof}

Combining equation \eqref{eqn:NN}, Lemma \ref{lemma4}, and Lemma \ref{theorem8}, we get the following Lemma
\begin{lemma} \label{lem:NN}
With probability at least $1-\delta$ over a non-stationary $\phi$-mixing sequence $\mathcal{Z}=((X_i,Y_i))^n_{i=1}$ with $\sqrt{\sum_i||X_i||^2_2}\leq B$, for all feed-forward neural network $F_{\mathcal{A},\boldsymbol{\sigma}}:\Rbb^d\rightarrow\Rbb^k$ such that $||A_i||_\sigma\leq s_i$ and $||A_i^T||_{2,1}\leq b_i$, we have
\begin{multline*}
\mathbb{P}_{(X,Y) \sim \Pi} \left \{ \arg \max_j [F_{\mathcal{A}, \boldsymbol{\sigma}}(X)]_j\neq Y \right \} \leq \mathcal{L}_{\mathcal{Z}}(F_{\mathcal{A},\boldsymbol{\sigma}}) + \frac{1}{n}\sum_{i=1}^n\mu_i+3\sqrt{||\Delta_n||^2_\infty\frac{\log(2/\delta)}{2n}} \\
+ \frac{8}{\sqrt{n}^3} + \frac{72B\ln(2W)\ln(n)}{\gamma n} \left(\sum^L_{i=1}\left(\frac{b_i}{s_i}\right)^{2/3}\right)^{3/2} \left(\prod_{i=1}^Ls_ip_i\right).
\end{multline*}
\end{lemma}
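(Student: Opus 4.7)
The plan is to obtain Lemma \ref{lem:NN} by chaining together the three preceding ingredients: Lemma \ref{lemma4}, the high-probability bound in equation \eqref{eqn:NN}, and the deterministic Rademacher bound in Lemma \ref{theorem8}. Because equation \eqref{eqn:NN} is itself just Theorem \ref{theorem100} instantiated with the ramp-loss hypothesis class $\mathcal{F}_{\mathcal{A},\boldsymbol{\sigma},\gamma}$ (which takes values in $[0,1]$ since $\ell_\gamma$ does), I do not need to re-derive anything probabilistic; the only work is bookkeeping.

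First I would start from the left-hand side of the claimed inequality and dominate it using Lemma \ref{lemma4}, which gives
\[
\mathbb{P}_{(X,Y)\sim\Pi}\bigl\{\arg\max_j [F_{\mathcal{A},\boldsymbol{\sigma}}(X)]_j\neq Y\bigr\}\;\leq\;\mathcal{L}(F_{\mathcal{A},\boldsymbol{\sigma}}).
\]
Next, I would apply equation \eqref{eqn:NN} to the right-hand side above. This is the single source of randomness in the argument: with probability at least $1-\delta$ over the sample $\mathcal{Z}$, it controls $\mathcal{L}(F_{\mathcal{A},\boldsymbol{\sigma}})$ by $\mathcal{L}_{\mathcal{Z}}(F_{\mathcal{A},\boldsymbol{\sigma}})$ plus $2\mathcal{R}_{\mathcal{Z}}(\mathcal{F}_{\mathcal{A},\boldsymbol{\sigma},\gamma})$ plus the drift term $\tfrac{1}{n}\sum_i\mu_i$ and the concentration term involving $\|\Delta_n\|_\infty$. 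Finally, I would replace the empirical Rademacher complexity on this same event using the deterministic inequality from Lemma \ref{theorem8} specialized to the assumed norm bounds $\|A_i\|_\sigma\leq s_i$ and $\|A_i^T\|_{2,1}\leq b_i$ and to the data-norm bound $\sqrt{\sum_i\|X_i\|_2^2}\leq B$. Multiplying the output of Lemma \ref{theorem8} by the factor of $2$ in front of $\mathcal{R}_{\mathcal{Z}}(\cdot)$ produces exactly the $8/\sqrt{n}^3$ and $72B\ln(2W)\ln(n)/(\gamma n)$ coefficients in the statement.

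Two minor points deserve care rather than calculation. The first is that the ramp loss $\ell_\gamma\circ(-\mathcal{M})$ must indeed lie in $[0,1]$ so that Theorem \ref{theorem100} applies as written; this is immediate from the definition $\ell_\gamma(r)=(1+r^-/\gamma)^+$. The second, which is the only genuinely delicate bookkeeping step, is that Lemma \ref{theorem8} is a deterministic, sample-wise bound on the empirical Rademacher complexity, so no union bound over networks or over $\gamma$ is required at this stage; the failure probability $\delta$ comes entirely from the concentration step inside Theorem \ref{theorem100}. Consequently the high-probability event on which \eqref{eqn:NN} holds is the same event on which the final conclusion of Lemma \ref{lem:NN} is asserted, and the combination is valid with probability at least $1-\delta$.

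I do not anticipate a hard step here: the proof is essentially an assembly of previously established inequalities. If there is any obstacle at all, it is ensuring that the hypothesis classes match up between Theorem \ref{theorem100} and Lemma \ref{theorem8} (both must refer to $\mathcal{F}_{\mathcal{A},\boldsymbol{\sigma},\gamma}$, the ramp-loss composition, so that the boundedness assumption of Theorem \ref{theorem100} and the covering-number machinery underlying Lemma \ref{theorem8} both apply to the same object) and that the numerical constants are propagated faithfully through the factor $2$ in front of the Rademacher term.
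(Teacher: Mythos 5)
Your proposal is correct and matches the paper's argument exactly: the paper obtains Lemma \ref{lem:NN} by combining equation \eqref{eqn:NN}, Lemma \ref{lemma4}, and Lemma \ref{theorem8}, with the factor of $2$ in front of $\mathcal{R}_{\mathcal{Z}}(\mathcal{F}_{\mathcal{A},\boldsymbol{\sigma},\gamma})$ producing the constants $8/\sqrt{n}^3$ and $72$. Your bookkeeping about the single high-probability event and the boundedness of the ramp loss is consistent with how the paper assembles these ingredients.
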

Now, we can achieve Theorem \ref{theorem200} by utilizing Lemma \ref{lem:NN} to derive a union bound over the parameter space and input space. This can be done by following the same steps as the proofs of Lemma A.9 in \citet{bartlett2017spectrally}. 
Finally, we note that $\mu_n=\mathcal{O}\left(1/\sqrt{n} \right)$ and $\phi(n)=\mathcal{O}\left( 1/n \right)$.
Therefore, 
\[\frac{1}{n}\sum_{i=1}^n\mu_i=\mathcal{O}\left(\frac{1}{\sqrt{n}} \right), \quad ||\Delta_n||_\infty=\mathcal{O}\left(\log n \right).
\]

\section{Discussion and conclusion}
In this paper, we propose a generalization bound of feed-forward neural networks for the nonstationary $\phi$-mixing sequences using Rademacher complexity. 
We first derive a generalization bound for bounded loss on a general hypothesis space when data are nonstationary and $\phi$-mixing.
Our result allows data to converge to the target distribution at a slower rate compared to \citet{kuznetsov2017generalization}.
Moreover, the generalization bound in \citet{kuznetsov2017generalization} does not work for our setting where data include only one mixing sequence.
Using our new bound, we establish a generalization bound of feed-forward neural networks, including the result of \citet{bartlett2017spectrally} for iid data as a special case. 
A future research direction is extending our generalization bound beyond mixing data.
Alternative options include data generated from a dynamical system \citep{ho2023adaptive}, evolutionary data \citep{ho2013asymptotic}, and data from infectious disease epidemics \citep{ho2018direct}.
Another direction is to develop a generalization bound for other types of deep neural networks.
This requires new bounds for the Rademacher complexity of these neural networks.

\section*{Acknowledgement}
LSTH was supported by the Canada Research Chairs program, the NSERC Discovery Grant RGPIN-2018-05447, and the NSERC Discovery Launch Supplement DGECR-2018-00181. We want to thank the University of Science, Vietnam National University Ho Chi Minh City, and AISIA Research Lab for supporting us in this project. 

\bibliographystyle{chicago}
\bibliography{mybibfile}
\end{document}